\documentclass[10pt,letterpaper]{article}
\usepackage[margin=0.9in]{geometry}
\usepackage{amsmath,amsthm}
\usepackage{newtxtext,newtxmath}
\usepackage{microtype,booktabs,graphicx,enumitem,array}
\usepackage[dvipsnames]{xcolor}
\usepackage{hyperref,fancyhdr}
\hypersetup{colorlinks=true,linkcolor=MidnightBlue,citecolor=MidnightBlue,urlcolor=MidnightBlue,
 pdftitle={Marginally Correct Tool Caches Can Reverse Group-Normalized Policy Updates},
 pdfauthor={Shivam Gupta},pdfsubject={Stochastic tool caching and group-relative learning}}
\setlist{nosep,leftmargin=1.5em}
\newtheorem{theorem}{Theorem}
\newtheorem{proposition}[theorem]{Proposition}
\newtheorem{corollary}[theorem]{Corollary}

\newcommand{\E}{\mathbb{E}}
\newcommand{\Var}{\operatorname{Var}}
\newcommand{\Bern}{\operatorname{Bern}}
\newcommand{\Bin}{\operatorname{Bin}}
\newcommand{\sgn}{\operatorname{sgn}}
\newcommand{\ind}{\mathrm{ind}}
\newcommand{\shr}{\mathrm{share}}

\title{\vspace{-1.4em}\LARGE\bfseries Marginally Correct Tool Caches Can Reverse\\ Group-Normalized Policy Updates}
\author{\large Shivam Gupta\thanks{LLM-based tools assisted with research exploration, code, mathematical analysis, manuscript preparation, and internal review. Internal automated reviews are not external peer review.}\\[3pt]\normalsize Independent Researcher\\
\normalsize\href{mailto:shivam1720406@gmail.com}{shivam1720406@gmail.com}}
\date{20 September 2026}

\begin{document}
\maketitle
\thispagestyle{empty}
\begin{abstract}
Tool-result caching reduces repeated execution in agent training, but also
couples rollout randomness. We study a two-action model in which independent
and shared execution preserve every rollout's conditional reward distribution.
Despite this marginal agreement, sharing one stochastic result per group can
reverse the expected group-normalized policy update. We derive an exact
finite-group expression: against a constant alternative, the shared update
follows the probability of winning minus the probability of losing, rather
than the difference in expected reward. A Bernoulli specialization yields a
wrong-direction region and a non-vanishing update-variance floor as group size
grows. Centering without group standard-deviation scaling preserves the
expected-return direction in this model, using an existing estimator control.
Exhaustive finite sums verify 540 configurations and 3,240 estimator evaluations,
with a separate ordered-sequence checker. An implementation audit reproduces
the sharing path in a pinned, unmodified TVCache stack using 256 scripted
rollouts. These results do not measure language-model training performance or
refute TVCache's deterministic-output contract. They establish that marginal
output validity alone cannot certify a stochastic cache as training-equivalent.
\end{abstract}

\section{Introduction}
An execution optimization can change a learner without changing its model,
prompt, reward function, or the marginal quality of any one tool response.
Consider a training group containing several identical calls to a stochastic
tool. Executing each call draws several outcomes. Executing one call and
broadcasting its result draws one outcome and repeats it. Both procedures can
return the correct distribution to each individual caller, while producing
different joint distributions across the group. A learner that compares
rollouts within that group can distinguish the two procedures.

This distinction matters for group-relative training. GRPO forms advantages
from rewards centered and scaled within a group of completions
\cite{deepseekmath}. Stateful tool caches address a different correctness
question: whether reuse respects the environment state that produced an output
\cite{tvcache}. State consistency is essential, but does not by itself preserve
the sampling law of a stochastic tool. This paper asks a narrower question:
\emph{if a cache preserves each rollout's conditional reward distribution, must
it preserve the expected group-normalized update?}

The answer is no. Our counterexample uses an action with a constant payoff and
an action with a random payoff. Under shared execution, all occurrences of the
random action receive the same newly sampled result within a group. The cache
is refreshed for every group. Consequently the effect requires neither stale
state nor a permanently unlucky cached answer. The normalizer removes the
magnitude of the realized payoff gap, leaving only its sign. For Bernoulli
payoffs, the resulting preference threshold is $1/2$, even when the alternative
has a substantially higher expected payoff.

Group-normalization bias and centering-based alternatives are established
\cite{drgrpo,che}. We study how a controlled change in tool execution induces
that bias, with exact statistical consequences and an auditable implementation
path. Our contributions are:
\begin{enumerate}
\item A finite-group characterization of shared-outcome updates for an arbitrary
scalar reward distribution, with a Bernoulli sign-reversal example.
\item A separation between marginal reward agreement and update agreement,
including asymptotic means, a persistent variance floor, and an existing
centered-estimator control.
\item Exhaustive numerical checks and a pinned runtime audit that keep
mathematical claims, scripted execution evidence, and untested deployment
implications separate.
\end{enumerate}

The engineering motivation is practical: a lower physical tool-call count is
not, by itself, evidence of equivalent training. The results identify a
condition to test when optimizing expensive stochastic tools. They do not
measure production prevalence, model accuracy loss, or monetary savings.

\section{Related work and scope}
\paragraph{Group-relative optimization.}
DeepSeekMath introduces GRPO as a critic-free group-relative policy optimization
method \cite{deepseekmath}. Liu et al. identify biases associated with
group-level standard-deviation scaling and response-length normalization, and
propose Dr.\ GRPO \cite{drgrpo}. Our centered-only control uses this existing
principle. We analyze a one-step, on-policy score estimator, not the complete
clipped, multi-epoch, token-weighted training algorithm.

\paragraph{Reward magnitude and stochastic feedback.}
Che et al. show that in a sparse-answer regime, group normalization can erase
the magnitude of an error penalty and change a decision threshold \cite{che}.
Our Bernoulli independent-execution model is affinely equivalent to their three-reward
setting. Sharing makes the relevant two-level reward structure arise at every
interior action probability, rather than only in a sparse-action regime. This
close relationship limits any claim of conceptual novelty. El Mansouri et al.
study correction for noisy rewards \cite{noisecorrected}; Xin measures
within-group verifier-error dependence \cite{xin}. Here the outcomes need not
be erroneous: the intervention changes their dependence directly.

\paragraph{Tool caching.}
TVCache reuses tool results across stateful trajectories \cite{tvcache}. Its
Appendix B assumes that tool outputs depend only on sandbox state and arguments.
Our stochastic setting is outside that deterministic-output contract, so the
counterexample does not refute its correctness argument. CacheRL studies cached
agent-training environments, including cache-aware reward design and masking
of injected tool tokens \cite{cacherl}. Correctly excluding environment tokens
from the policy loss does not, by itself, establish equivalence of the joint
reward law. We neither reproduce those papers' training results nor claim that
their reported benchmarks exhibit our effect.

\section{Model and estimators}
Let $G\geq2$ be the rollout-group size. Each rollout independently chooses
$a_i\sim\Bern(p)$, where $0<p<1$ and $p=\sigma(\theta)$ is parameterized by a
logit $\theta$. Action $a_i=0$, denoted A, returns the constant $c$. Action
$a_i=1$, denoted B, returns a scalar reward $Y$ with finite first moment
$\mu=\E[Y]$. Actions are independent of tool randomness. The expected return
and its gradient are
\begin{equation}
J(\theta)=(1-p)c+p\mu,\qquad
J'(\theta)=p(1-p)(\mu-c).
\label{eq:target}
\end{equation}
The policy score for a sampled action is $\partial_\theta\log\pi_\theta(a_i)=a_i-p$.

\paragraph{Two execution laws.}
Under \emph{independent execution}, each B invocation receives an independent
copy $Y_i$. Under \emph{shared execution}, one $Y$ is drawn independently for
the group and reused for every B invocation. Drawing on the first B cache miss
is equivalent to drawing in advance because actions are independent of $Y$.
There is no persistence between groups. In both modes,
\begin{equation}
\mathcal L(r_i\mid a_i=0)=\delta_c,\qquad
\mathcal L(r_i\mid a_i=1)=\mathcal L(Y).
\label{eq:marginals}
\end{equation}
Thus the intervention preserves conditional reward marginals, not only their
means. When two rollouts choose B, their outcomes are independent in the first
mode and identical in the second. This is the only distributional change.

\paragraph{Updates.}
Write $\bar r=G^{-1}\sum_i r_i$ and
$s_r^2=G^{-1}\sum_i(r_i-\bar r)^2$. We study the ascent estimators
\begin{align}
U_\epsilon&=\frac1G\sum_{i=1}^G
 (a_i-p)\frac{r_i-\bar r}{s_r+\epsilon},\qquad \epsilon\geq0,
\label{eq:norm}\\
V&=\frac1G\sum_{i=1}^G(a_i-p)(r_i-\bar r).
\label{eq:centered}
\end{align}
At $s_r=0$, $U_0$ is defined to be zero. Positive updates increase the
probability of B. The denominator is the population standard deviation within
the group. Using the sample standard deviation changes the scale of $U_0$ by
a positive, group-size-dependent factor, and therefore does not change its
sign. We make no such scale-equivalence claim for nonzero $\epsilon$ without
adjusting that constant.

These estimators describe the reward component of an on-policy update at unit
importance ratio. We do not model KL penalties, clipping after parameter
movement, adaptive optimizers, variable token lengths, or multi-turn feedback.

\section{Exact consequences of sharing}
Let $N=\sum_i a_i\sim\Bin(G,p)$ and define
\begin{equation}
w_G(n)=\frac{\sqrt{n(G-n)}}{G},\quad
h_G(n)=w_G(n)^2,\quad S_G(p)=\E[w_G(N)].
\label{eq:weights}
\end{equation}
For interior $p$ and $G\geq2$, $S_G(p)>0$.

\begin{theorem}[Shared outcomes replace payoff magnitude by ordering]
Under shared execution, for $\epsilon=0$,
\begin{equation}
U_0=w_G(N)\sgn(Y-c),\qquad
\E[U_0]=S_G(p)\bigl(\Pr(Y>c)-\Pr(Y<c)\bigr),
\label{eq:general}
\end{equation}
where $\sgn(0)=0$. Consequently, conditional marginal agreement in
\eqref{eq:marginals} does not ensure agreement with the expected-return direction.
\end{theorem}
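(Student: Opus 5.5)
The plan is to condition on $N$ and $Y$ and compute $U_0$ in closed form. Under shared execution, the $N$ rollouts that choose B all receive the same reward $Y$, and the remaining $G-N$ rollouts receive $c$. So the reward vector takes at most two values. Set $D=Y-c$.

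\emph{Group statistics.} The group mean is $\bar r=c+(N/G)D$. Hence $r_i-\bar r=(1-N/G)D$ when $a_i=1$, and $r_i-\bar r=-(N/G)D$ when $a_i=0$. The population variance is
\[
s_r^2=\frac{N(G-N)}{G^2}D^2=h_G(N)D^2,
\qquad\text{so}\qquad s_r=w_G(N)\,|D|.
\]

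\emph{Degenerate cases.} These are the cases with $s_r=0$: either $N\in\{0,G\}$ or $D=0$. By convention $U_0=0$ there. The right-hand side also vanishes, because $w_G(0)=w_G(G)=0$ and $\sgn(0)=0$.

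\emph{Numerator.} Otherwise, compute
\[
\frac1G\sum_i(a_i-p)(r_i-\bar r)
=\frac{D}{G}\Bigl[N(1-p)\Bigl(1-\frac NG\Bigr)-(G-N)(-p)\frac NG\Bigr].
\]
The bracket simplifies: the $p$-terms cancel, giving $N(G-N)/G$. So the numerator equals $D\,h_G(N)$. Equivalently, centered residuals sum to zero, so the $p$ term drops out and the numerator is $\frac1G\sum_i a_i(r_i-\bar r)$.

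\emph{Conclusion of the first identity.} Dividing by $s_r$ gives
\[
U_0=\frac{h_G(N)\,D}{w_G(N)\,|D|}=w_G(N)\sgn(D).
\]

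\emph{Expectation.} Next, $N$ is a function of the actions only, and the actions are independent of $Y$. So $N$ and $Y$ are independent and the expectation factorizes:
\[
\E[U_0]=\E[w_G(N)]\,\E[\sgn(Y-c)]=S_G(p)\bigl(\Pr(Y>c)-\Pr(Y<c)\bigr).
\]
Integrability is not an issue, since $|U_0|\le 1/2$.

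\emph{The ``consequently'' clause.} I would exhibit a concrete law. Take $Y\sim\Bern(q)$ and $c\in(0,1)$. Then $\E[U_0]=S_G(p)(2q-1)$, while $J'(\theta)=p(1-p)(q-c)$. Choosing, say, $q=0.4$ and $c=0.3$ gives $\mu>c$, so the expected return increases toward B, yet $\E[U_0]<0$ because $S_G(p)>0$. The conditional marginals \eqref{eq:marginals} hold by construction, so marginal agreement does not force agreement in sign.

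There is no real obstacle here. The only points needing care are the consistency of the zero-$s_r$ convention with the formula and the independence step that lets the expectation factor.
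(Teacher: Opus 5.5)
Your proposal is correct and follows the paper's proof essentially step for step. Both arguments cancel the $-p$ score terms via zero-sum centered rewards and use the two-valued reward vector to get numerator $h_G(N)(Y-c)$ and $s_r=w_G(N)|Y-c|$. Both then check that the two sides agree in the degenerate cases and factor the expectation using independence of $N$ and $Y$. Your explicit Bernoulli witness ($q=0.4$, $c=0.3$) for the ``consequently'' clause is a valid instance of the paper's subsequent corollary, which the paper's proof leaves implicit.
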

\begin{proof}
Centered rewards sum to zero, so the $-p$ score terms cancel. The centered
numerator equals $h_G(N)(Y-c)$. The shared group contains only the two reward
values $c$ and $Y$, giving $s_r=w_G(N)|Y-c|$. Division yields
\eqref{eq:general} when $0<N<G$ and $Y\ne c$. Both sides are zero otherwise.
Finally $N$ and $Y$ are independent, so their expectations factor.
\end{proof}

The theorem concerns the \emph{direction of the update}, not a claim that an
arbitrary full learning algorithm maximizes a median objective. It separates
the frequency of beating an alternative from the value of doing so. A small
gain on many calls and a large loss on fewer calls can have positive ordering
balance but negative mean advantage.

\begin{corollary}[Bernoulli threshold and wrong-direction region]
For $Y\sim\Bern(q)$ and $0<c<1$,
\begin{equation}
\E[U_{0,\shr}]=(2q-1)S_G(p).
\label{eq:bernoulli}
\end{equation}
For every $G\geq2$ and interior $p$, the shared normalized update has the
opposite sign to $J'$ whenever $1/2<q<c<1$ or $0<c<q<1/2$.
\end{corollary}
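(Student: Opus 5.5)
The plan is to specialize the Theorem to $Y\sim\Bern(q)$ and then compare signs with \eqref{eq:target}. Since $Y$ takes only the values $0$ and $1$ and $0<c<1$, neither value equals $c$. Hence $\Pr(Y=c)=0$, and the event $Y>c$ coincides with $Y=1$ while $Y<c$ coincides with $Y=0$. This gives $\Pr(Y>c)-\Pr(Y<c)=q-(1-q)=2q-1$. Substituting into the expectation formula in \eqref{eq:general} yields \eqref{eq:bernoulli} directly; here $U_{0,\shr}$ denotes $U_0$ computed under shared execution. The Theorem already allows an arbitrary scalar reward with a finite first moment, and a Bernoulli variable qualifies, so no new computation of $U_0$ is needed.

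For the sign comparison, I would use $J'(\theta)=p(1-p)(\mu-c)$ with $\mu=q$. For interior $p$ the factor $p(1-p)$ is positive, so $\sgn J'=\sgn(q-c)$. By the remark preceding the Theorem, $S_G(p)>0$ for every $G\geq2$ and interior $p$. To justify that remark in one line, note that $w_G(n)\geq0$, with strict inequality at $n=1$ (since $G-1\geq1$), and $\Pr(N=1)=Gp(1-p)^{G-1}>0$. Therefore $\sgn\E[U_{0,\shr}]=\sgn(2q-1)$.

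The two regions then follow by case check. If $1/2<q<c<1$, then $2q-1>0$ while $q-c<0$, so the expected update is positive and $J'<0$. If $0<c<q<1/2$, then $2q-1<0$ while $q-c>0$, so the expected update is negative and $J'>0$. In both cases the signs are strictly opposite and both quantities are nonzero, which is the claimed reversal.

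No step is a real obstacle. The only care needed is to confirm that $Y$ never equals $c$, which ensures that the $\sgn(0)$ convention and ties contribute nothing, and to confirm the strict positivity of $S_G(p)$, which makes the reversal a strict sign opposition rather than a degenerate zero.
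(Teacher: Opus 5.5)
Your proposal is correct and follows the paper's intended route. You specialize the Theorem's ordering formula to Bernoulli $Y$, where $0<c<1$ rules out ties, and then compare $\sgn(2q-1)$ with $\sgn(q-c)$ using $S_G(p)>0$. Your argument via the $N=1$ term, $w_G(1)>0$ and $\Pr(N=1)>0$, also supplies a justification for the positivity of $S_G(p)$, which the paper only asserts.
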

The threshold is $q=1/2$ for the shared update and $q=c$ for expected return.
This statement does not say that independent normalization is unbiased.
At $G=2$, both modes have the same expected update: a group containing both
actions includes only one stochastic B result.

\begin{proposition}[Centering preserves the mean-gradient direction]
For either execution law and any integrable $Y$,
\begin{equation}
\E[V]=\left(1-\frac1G\right)p(1-p)(\mu-c).
\label{eq:centeredmean}
\end{equation}
Multiplication by $G/(G-1)$ removes the self-including-baseline factor.
\end{proposition}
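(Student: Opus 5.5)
Since $\sum_i(r_i-\bar r)=0$, the $-p$ part of the score drops out of $V$, so
\[
V=\frac1G\sum_i a_i(r_i-\bar r)=\frac1G\sum_i a_i r_i-\frac{N}{G}\,\bar r .
\]
The argument is linearity of expectation applied to this expression. Expanding $\bar r=G^{-1}\sum_j r_j$ gives
\[
V=\frac1G\sum_i a_i r_i-\frac1{G^2}\sum_{i,j}a_i r_j .
\]
The double sum splits into diagonal terms $i=j$, which contribute $a_i r_i$ (using $a_i^2=a_i$), and off-diagonal terms $i\neq j$. Hence
\[
V=\frac{G-1}{G^2}\sum_i a_i r_i-\frac1{G^2}\sum_{i\ne j}a_i r_j .
\]

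Each expectation involves at most one reward, so the joint law of the rewards never enters. This is why the result holds for both execution laws.

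For the diagonal terms, \eqref{eq:marginals} gives $\E[a_i r_i]=p\mu$.

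For the off-diagonal terms with $i\ne j$, condition on $a_j$. Since $a_i$ is independent of $a_j$ and of the tool randomness,
\[
\E[a_i r_j]=p\,\E[r_j]=p\bigl((1-p)c+p\mu\bigr).
\]
This step needs care under shared execution, because $r_j$ depends on the common $Y$. Still, $a_i$ is independent of the pair $(a_j,Y)$, and $r_j$ is a function of that pair alone. So the factorization is justified, and integrability of $Y$ makes every term finite.

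Substituting, with $G$ diagonal terms and $G(G-1)$ off-diagonal ones:
\[
\E[V]=\frac{G-1}{G}\Bigl(p\mu-p\bigl((1-p)c+p\mu\bigr)\Bigr)=\frac{G-1}{G}\,p(1-p)(\mu-c),
\]
which is \eqref{eq:centeredmean}. Comparing with \eqref{eq:target}, the right-hand side equals $(1-1/G)J'(\theta)$. Multiplying by $G/(G-1)$ therefore gives an unbiased estimator of $J'$, which proves the final sentence of the proposition.

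The main obstacle is the off-diagonal factorization under shared execution. There the $r_j$ are dependent across $j$, so I would spell out explicitly that only the independence of $a_i$ from $(a_j,Y)$ is used, and never independence of $r_i$ from $r_j$.
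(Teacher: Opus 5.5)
Your proof is correct, but it organizes the computation differently from the paper. The paper conditions on the action count $N=n$. Because every A reward equals $c$, the centered numerator collapses to $(G-n)\bigl(\sum_{i:a_i=1}r_i-nc\bigr)/G^2$. The conditional B-reward sum has expectation $n\mu$ under either law, and the factor $1-1/G$ comes from the binomial moment $\E[N(G-N)]=G(G-1)p(1-p)$. You instead expand $N\bar r$ as a double sum and take unconditional expectations term by term. In your version the factor $(G-1)/G$ comes from two sources: the diagonal coefficient $1/G-1/G^2$, which is the self-including baseline seen directly, and the count of $G(G-1)$ cross pairs.

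Your route makes the irrelevance of the execution law fully explicit. Each term $a_ir_j$ contains a single reward, and the joint law of $(a_i,a_j,r_j)$ is identical under both laws. Your handling of the shared case, using only that $a_i$ is independent of $(a_j,Y)$, is exactly right. The paper's route yields the stronger intermediate identity $\E[V\mid N]=h_G(N)(\mu-c)$. That common conditional mean is reused in the law-of-total-variance computation of Appendix~\ref{app:centeredvar}.

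One small point: the parenthetical ``using $a_i^2=a_i$'' is unnecessary, because the diagonal terms of $\sum_{i,j}a_ir_j$ are already $a_ir_i$ with no square appearing.
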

\begin{proof}
Conditional on $N=n$, the centered numerator is
$(G-n)\bigl(\sum_{i:a_i=1}r_i-nc\bigr)/G^2$. Both laws give conditional sum
expectation $n\mu$. Apply $\E[N(G-N)]=G(G-1)p(1-p)$.
\end{proof}
This is an existing baseline correction. It does not restore the entire
distribution of the update, nor establish equivalence after nonlinear clipping
or adaptive optimizer transformations.

\begin{theorem}[A persistent mean shift and variance floor]
\label{thm:variance}
Suppose $Y\sim\Bern(q)$ with $0<q<1$, $0<c<1$, and fixed interior $p$.
As $G\to\infty$,
\begin{align}
\E[U_{0,\ind}]&\longrightarrow
\frac{p(1-p)(q-c)}{\sqrt{p q(1-q)+p(1-p)(q-c)^2}},
\label{eq:freshlimit}\\
\E[U_{0,\shr}]&\longrightarrow (2q-1)\sqrt{p(1-p)},
\label{eq:sharelimit}\\
\Var(U_{0,\ind})&\longrightarrow0,\qquad
\Var(U_{0,\shr})\longrightarrow4q(1-q)p(1-p).
\label{eq:varlimit}
\end{align}
At finite $G$, the shared variance is exactly
\begin{equation}
\Var(U_{0,\shr})=(1-1/G)p(1-p)-(2q-1)^2S_G(p)^2.
\label{eq:finitevar}
\end{equation}
\end{theorem}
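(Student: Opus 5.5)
The shared-execution claims come first, because the earlier theorem already gives the exact form $U_{0,\shr}=w_G(N)\sgn(Y-c)$. With $Y\sim\Bern(q)$ and $0<c<1$, we have $Y\ne c$ almost surely. So $\sgn(Y-c)^2=1$, and $U_{0,\shr}^2=h_G(N)=N(G-N)/G^2$. Using $\E[N(G-N)]=G(G-1)p(1-p)$, as in the proof of the centering proposition, gives $\E[U_{0,\shr}^2]=(1-1/G)p(1-p)$. Subtracting the squared mean from the Bernoulli corollary yields \eqref{eq:finitevar} exactly.

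For the limits, I will show $S_G(p)=\E\sqrt{(N/G)(1-N/G)}\to\sqrt{p(1-p)}$. This holds because $N/G\to p$ almost surely by the law of large numbers, and the map $x\mapsto\sqrt{x(1-x)}$ is bounded and continuous on $[0,1]$, so bounded convergence applies. This gives \eqref{eq:sharelimit}. Substituting into \eqref{eq:finitevar} gives the variance limit $p(1-p)\bigl(1-(2q-1)^2\bigr)=4q(1-q)p(1-p)$.

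For independent execution, I write $U_{0,\ind}$ as a ratio of empirical moments. Let $\hat p=N/G$, let $\hat q$ be the empirical mean of the $Y_i$ over B-rollouts, and let $\bar r=(1-\hat p)c+\hat p\hat q$. Since $\sum_i(r_i-\bar r)=0$, the numerator is $G^{-1}\sum_i a_i(r_i-\bar r)=\hat p(\hat q-\bar r)=\hat p(1-\hat p)(\hat q-c)$. The denominator $s_r^2$ is the within-group variance of a two-component mixture:
\[
s_r^2=\hat p\,\hat q(1-\hat q)+\hat p(1-\hat p)(\hat q-c)^2 ,
\]
where the first term uses $Y_i^2=Y_i$. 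By the strong law, $\hat p\to p$ and $\hat q\to q$ almost surely, since $N\to\infty$. The limiting denominator $\sqrt{pq(1-q)+p(1-p)(q-c)^2}$ is strictly positive because $0<q<1$. By continuous mapping, $U_{0,\ind}$ therefore converges almost surely to the constant on the right of \eqref{eq:freshlimit}.

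\textbf{Main obstacle.} Almost-sure convergence does not by itself give convergence of the mean or of the variance; for that I need uniform integrability. The direct route is a uniform bound. By Cauchy--Schwarz, $|U_0|\le\bigl(G^{-1}\sum(a_i-p)^2\bigr)^{1/2}\le 1$, because the normalized centered rewards have unit mean square, and this stays true with the convention $U_0=0$ when $s_r=0$. Bounded convergence then gives $\E[U_{0,\ind}]\to$ the stated limit and $\E[U_{0,\ind}^2]\to(\text{limit})^2$. Hence $\Var(U_{0,\ind})\to0$, which completes \eqref{eq:varlimit}. The degenerate events $N\in\{0,G\}$ or $s_r=0$ have vanishing probability, and the bound already covers them.
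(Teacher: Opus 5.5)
Your proof is correct and follows the paper's argument. It uses the exact shared second moment $\E[U_{0,\shr}^2]=\E[h_G(N)]=(1-1/G)p(1-p)$, laws of large numbers for the limits, and a Cauchy--Schwarz uniform bound so that bounded convergence handles first and second moments. The only differences are minor: you spell out the empirical-moment form of the independent update, and your bound $|U_0|\le 1$ is looser than the paper's $\sqrt{(N/G)(1-N/G)}\le 1/2$ (obtained by replacing $a_i-p$ with $a_i-N/G$, valid because centered rewards sum to zero), but either bound suffices.
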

\begin{proof}
Under independent execution, laws of large numbers give the covariance
$p(1-p)(q-c)$ and reward variance $pq(1-q)+p(1-p)(q-c)^2$.
Under sharing, use \eqref{eq:general} and $N/G\to p$. In either mode,
Cauchy--Schwarz gives $|U_0|\leq\sqrt{(N/G)(1-N/G)}\leq1/2$.
Bounded convergence therefore applies to first and second moments. For the
shared second moment, $(Y-c)$ is never zero, so $U_0^2=h_G(N)$ and
$\E[U_0^2]=(1-1/G)p(1-p)$. Subtracting the squared mean proves
\eqref{eq:finitevar} and its limit.
\end{proof}
In the wrong-direction regions, sufficiently large independent groups and
shared groups point in opposite directions. Increasing the number of policy
samples does not increase the number of independent tool outcomes in the
shared group. Independent groups can still average down this noise; the result
is a within-group variance floor, not irreducible uncertainty across all data.

\subsection{Nonzero numerical stabilization}
For $\epsilon>0$ in the Bernoulli setting, define
\begin{equation}
A_\epsilon=\E\!\left[\frac{h_G(N)(1-c)}{w_G(N)(1-c)+\epsilon}\right],\qquad
B_\epsilon=\E\!\left[\frac{h_G(N)c}{w_G(N)c+\epsilon}\right].
\end{equation}
Then $\E[U_{\epsilon,\shr}]=qA_\epsilon-(1-q)B_\epsilon$ and the zero-update
threshold is $q_\epsilon=B_\epsilon/(A_\epsilon+B_\epsilon)$.
For $c>1/2$, $1/2<q_\epsilon<c$; for $c<1/2$, $c<q_\epsilon<1/2$.
These inequalities follow termwise by comparing $A_\epsilon$ and
$B_\epsilon$, and $(1-c)B_\epsilon$ with $cA_\epsilon$.
As $\epsilon\downarrow0$ the threshold tends to $1/2$; as
$\epsilon\to\infty$ it tends to $c$. A nonzero stabilizer therefore does not
automatically restore the intended preference threshold. Appendix~\ref{app:epsilon}
states the endpoint conventions.

\section{Exhaustive numerical verification}
\subsection{Protocol and controls}
Before executing the sweep, we fixed $G\in\{2,4,8,16,32,64\}$,
$p\in\{0.1,0.5,0.9\}$, $c\in\{0.1,0.3,0.5,0.7,0.9\}$, and
$q\in\{0,0.2,0.4,0.6,0.8,1\}$. This gives 540 configurations.
For each configuration we evaluate both execution modes with centering alone
and with normalization at $\epsilon\in\{0,10^{-4}\}$, for 3,240 estimator
evaluations. These are finite probability-weighted sums computed in floating
arithmetic, not Monte Carlo observations or independent trials.

With $N=n$, let $K$ count successful B outcomes. Independent execution uses
$K\mid N=n\sim\Bin(n,q)$; shared execution uses $K\in\{0,n\}$ with
probabilities $1-q,q$. The centered numerator is
\begin{equation}
V(n,k)=\frac{(G-n)(k-nc)}{G^2}.
\end{equation}
The implementation computes the reward variance from centered squared
deviations, avoiding subtraction of nearly equal second moments. It accumulates
weighted sums with compensated summation.

A separately implemented checker enumerates ordered rollout outcomes and
evaluates the literal score expression, rather than the count reduction. It
checks 288 small-group combinations. Other controls verify probability mass,
the reward mean, the centered identity, the shared formula, deterministic
$q\in\{0,1\}$ equivalence, and $G=2$ equivalence. All six numerical test methods pass.
Maximum probability-mass error is below $3.34\times10^{-15}$; maximum marginal-mean
error is below $1.78\times10^{-15}$. Numerical precision checks are not statistical
confidence intervals.

\subsection{A preselected witness}
The protocol selected $p=0.5,c=0.9,q=0.8$ analytically before execution.
Both laws have mean reward $0.85$ and true gradient $-0.025$. At $G=64$,
independent normalization gives $-0.078867$, while sharing gives $+0.297628$.
The centered estimator gives $-0.024609$ under either law
(Table~\ref{tab:witness} and Figure~\ref{fig:updates}). Thus the intervention reverses the expected update
while leaving the conditional reward marginals unchanged.

\begin{table}[ht]
\centering
\caption{Exact finite-sum witness, rounded to six decimals. Positive updates
increase B, the lower-mean action. Normalized columns use $\epsilon=0$.}
\label{tab:witness}
\begin{tabular}{rrrr}
\toprule
$G$ & Independent normalized & Shared normalized & Centered, either law\\
\midrule
2 & 0.150000 & 0.150000 & -0.012500\\
4 & 0.187129 & 0.242404 & -0.018750\\
8 & 0.111526 & 0.278348 & -0.021875\\
16 & 0.005998 & 0.290118 & -0.023438\\
32 & -0.059543 & 0.295196 & -0.024219\\
64 & -0.078867 & 0.297628 & -0.024609\\
\bottomrule
\end{tabular}
\end{table}

\begin{figure}[ht]
\centering
\includegraphics[width=\linewidth]{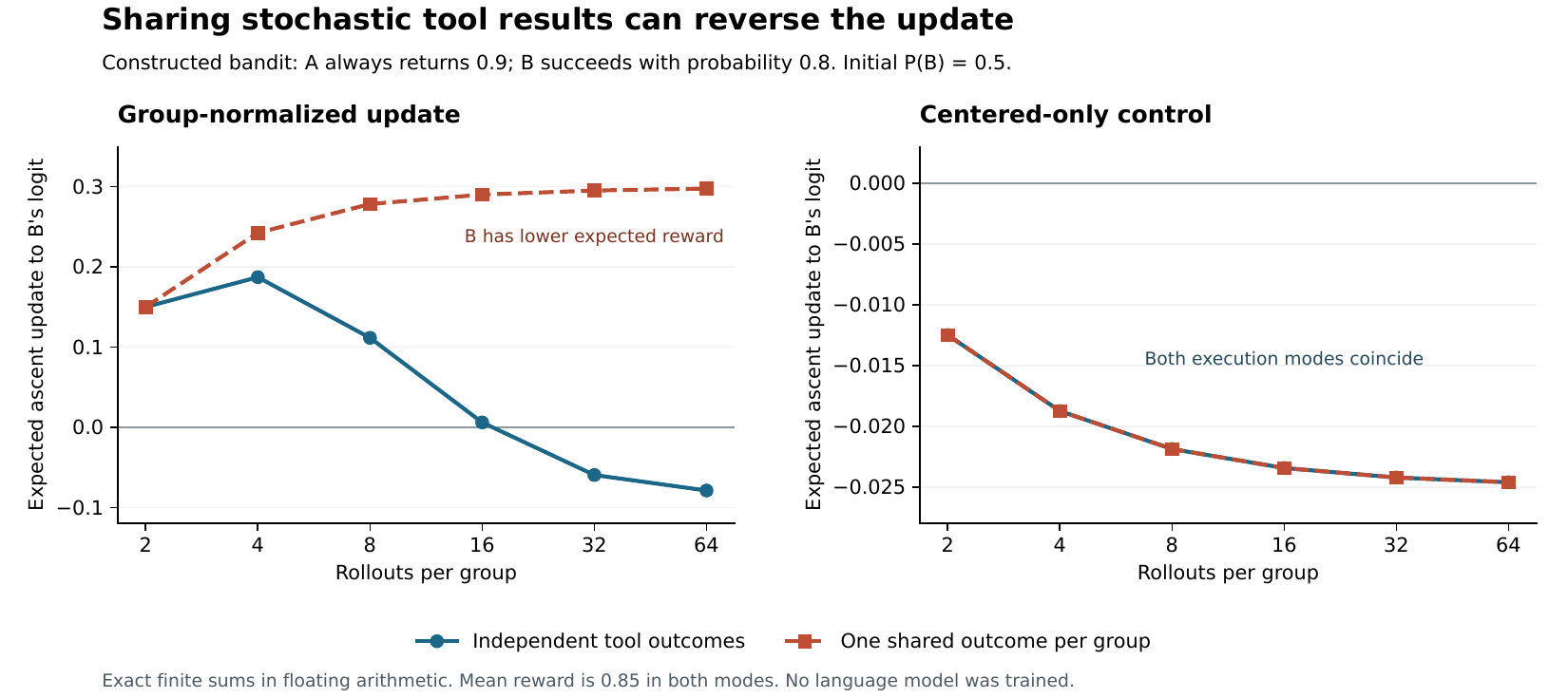}
\caption{Increasing rollout count separates the two normalized updates in the
preselected witness. Centering alone preserves their expected direction.
Panels have different vertical scales. Curves are exhaustive calculations,
not learning curves or measured model performance.}
\label{fig:updates}
\end{figure}

Across the entire specified grid, the modes have opposite normalized-update
signs in 54 of 540 configurations at each tested $\epsilon$. Independent
normalization opposes the expected-return gradient in 54 configurations, and
shared normalization in 108. Centering alone has no such sign discrepancy.
The deterministic controls agree exactly. The grid was constructed to examine
the mechanism; its fractions must not be interpreted as deployment prevalence.
The $\epsilon=10^{-4}$ result verifies robustness to one practical numerical
stabilizer, not all implementation conventions.

\section{A pinned implementation audit}
\subsection{Execution boundary}
To connect the intervention to executable infrastructure, we inspected and
ran TVCache revision \texttt{3a4f95a6582e}
\cite{tvcachecode}. The adaptive follow-up protocol was written after source
inspection and after the numerical study, before executing the runtime probe.
The probe uses unmodified implementations of the semantic stateful executor,
async client, Flask routes, and immutable prefix-tree backend. Only the HTTP
transport is replaced with an in-process bridge to Flask's test client. A
scripted environment implements the tool interface. Socket connections and DNS
resolution are disabled during execution.

Each case has 32 sequential one-call rollouts, alternating A and B, with a new
executor for each rollout and a shared backend within the case. A returns the
string \texttt{0.9}. B either always returns \texttt{0.8}, or alternates
\texttt{1}, \texttt{0} on physical execution, starting with \texttt{1}.
The latter is a deliberately varying stream, not a Bernoulli sample.
We compare direct execution, shared identity, a separate task namespace per
rollout, and a separate draw identifier in tool arguments.

\begin{table}[ht]
\centering
\caption{Runtime audit. Counts are per case of 32 scripted rollouts. Each mode
is run once with constant outputs and once with a varying stream.}
\label{tab:runtime}
\begin{tabular}{lrl}
\toprule
Mode & Physical calls & Returned B stream in varying fixture\\
\midrule
Direct execution & 32 & Alternating 1, 0\\
Shared identity & 2 & Sixteen copies of 1\\
Separate task namespace & 32 & Alternating 1, 0\\
Separate draw identity & 32 & Alternating 1, 0\\
\bottomrule
\end{tabular}
\end{table}

\subsection{Observed behavior and interpretation}
All eight cases satisfy the protocol's expectations (Table~\ref{tab:runtime}).
The 256 scripted rollouts include 64 direct-environment controls and 192
rollouts through the cache stack. In total there are 196 physical tool calls
and 648 in-process HTTP requests. The socket and DNS guards record zero
attempts. Three full runs in separate processes produce byte-identical result
JSON. Before import, all 29 exported source files pass hash verification.

The shared case executes only one A call and one B call. Reuse preserves the
constant-output control and repeats the first B result in the varying fixture.
Identity-separated controls retain all physical executions. These controls
trade away reuse; they are not a new or cost-free fix. The experiment establishes
an implementation path for sharing, not a failure of the vendor's stated
deterministic contract or a measured training regression.

The inspected video integration labels loading and preprocessing as
state-changing and includes a model-backed caption call. This motivates a
separate variability measurement, which we have not performed. Importantly,
TVCache's paper lists its EgoSchema training objective as importance sampling,
while other workloads use GRPO \cite{tvcache}. We therefore do not infer that
its caption workload uses the group-normalized estimator studied here.

\section{Implications, limitations, and reproducibility}
\paragraph{Cache equivalence is estimator-dependent.}
For the mean-centered estimator in this model, sharing preserves the expected
gradient up to the usual finite-group factor. For the normalized estimator, it
need not. An appropriate training audit must specify both the execution law and
the consuming estimator. Matching tool-response means or individual output
distributions alone is weaker than matching the expected update.

\paragraph{Cost must be reported separately.}
In the one-step Bernoulli model, independent execution requires $Gp$ expected
physical B calls per group, whereas sharing requires $1-(1-p)^G$. That saving
comes with different gradient variance. Equal-rollout comparisons and
equal-physical-call comparisons answer different questions. Our work reports
no time-to-accuracy or dollar-efficiency result. Appendix~\ref{app:centeredvar}
gives exact centered-estimator variances to support such comparisons without
assuming that identical expected gradients imply identical learning behavior.

\paragraph{Scope of the evidence.}
There are two actions, one decision per rollout, a fixed policy, and a constant
alternative. Theorems concern a specific on-policy reward estimator. We do not
train or evaluate a language model, establish an end-to-end convergence result,
or measure the frequency of stochastic tools in deployed agents. Actual
multi-turn policies can respond to cached observations, changing their action
distribution as well as their rewards. Cache lifetime, concurrent misses,
partial sharing, eviction, and stale state can introduce additional effects
outside the model. A production result would require representative tool
distributions, measured reuse, repeated training runs, and equal-budget
comparisons with existing optimizers. Automated internal review does not
replace independent scientific review or the author's responsibility for the
manuscript.

\paragraph{Reproducibility.}
The artifact contains pre-execution protocols, the independent checker, all
finite-sum results, runtime traces, source and result hashes, exact dependency
versions, and the figure-generation code. The numerical study requires only
Python's standard library; plotting uses Matplotlib. The runtime probe uses
Python 3.12 with Flask 3.1.2, HTTPX 0.28.1, and Requests 2.32.5, with all
transitive versions pinned. Third-party code is retrieved from the pinned Git
revision, kept unmodified, and excluded from redistribution. No credentials,
paid API calls, participant data, or model outputs are required to reproduce
the reported experiments. The public artifact, including source and numerical
results, is available at \url{https://github.com/shi1720/tool-cache-coupling}.

\paragraph{Broader impact.}
The intended use is auditing training optimizations before relying on their
efficiency claims. A marginally valid cache can be useful and still change the
learning problem. Conversely, this constructed counterexample should not be
used to assert that tool caching is generally unsafe or that a named system's
published results are invalid.

\section{Conclusion}
Sharing a stochastic tool result changes the joint sampling law seen by a
group-relative learner. In a controlled model with identical conditional reward
marginals, the shared normalized update follows an ordering comparison rather
than the mean-payoff advantage. Sharing can reverse the expected normalized
update and leave a
within-group variance floor as rollout count grows. Exhaustive calculations and
a pinned runtime audit make the mechanism reproducible. The practical lesson
is specific: evaluate stochastic reuse against the estimator it serves, and
measure physical tool outcomes separately from rollout count.

\clearpage
\appendix
\section{Exact summation and equality controls}
For $b_G(n;p)=\binom Gn p^n(1-p)^{G-n}$, the independent expected update is
\begin{equation}
\E[U_{\epsilon,\ind}]=\sum_{n=0}^G b_G(n;p)
\sum_{k=0}^n\binom nk q^k(1-q)^{n-k}
\frac{(G-n)(k-nc)}{G^2(s(n,k)+\epsilon)},
\end{equation}
where the fraction is zero for a zero denominator and
\begin{align}
m(n,k)&=((G-n)c+k)/G,\\
s(n,k)^2&=\bigl[(G-n)(c-m)^2+k(1-m)^2+(n-k)m^2\bigr]/G.
\end{align}
The shared sum replaces the inner binomial distribution with mass $1-q$ at
$k=0$ and mass $q$ at $k=n$. When $n=0$, only one outcome is counted, with
total mass one. The numerical implementation stores the update expectation and variance,
probability mass, and expected reward for every estimator.

The deterministic controls $q=0$ and $q=1$ give the same reward vector under
both laws, conditional on the action vector. Their update distributions agree,
not just their means. For $G=2$, groups with equal actions have identical
scores and a zero centered update even if their rewards differ. Mixed-action
groups have one B sample, so the update distributions agree in that case too.
These facts explain the zero differences observed by the controls.

\section{Variance of the centered estimator}
\label{app:centeredvar}
In the Bernoulli model, condition on $N$ and write $h=h_G(N)$. The law of total
variance gives
\begin{align}
\Var(V_{\shr})&=q(1-q)\E[h^2]+(q-c)^2\Var(h),\\
\Var(V_{\ind})&=q(1-q)\E\!\left[\frac{N(G-N)^2}{G^4}\right]
 +(q-c)^2\Var(h).
\end{align}
The conditional means are the same, $h(q-c)$. Sharing replaces the conditional
variance of a sum of $N$ independent Bernoulli outcomes, $Nq(1-q)$, by
$N^2q(1-q)$. Substitution into the squared coefficient $(G-N)^2/G^4$ proves
the formulas. Correcting the baseline factor multiplies either variance by
$G^2/(G-1)^2$.

These expressions make an important distinction explicit: a mean-direction
repair does not preserve gradient noise. Any efficiency claim must account for
the number of independently sampled groups and the physical call budget.
We do not turn these identities into a wall-clock or convergence claim.

\section{Stabilizer threshold and general reward ties}
\label{app:epsilon}
For $\epsilon>0$, $A_\epsilon$ and $B_\epsilon$ are strictly positive at
interior $p$ and $c$. If $c>1/2$, then for every $0<n<G$, the corresponding
summands satisfy $B_n>A_n$ and $(1-c)B_n<cA_n$. The first follows because
$x/(w x+\epsilon)$ is increasing in positive $x$; the second follows by
comparing the denominators after the common numerator $h c(1-c)$ is factored
out. Summing preserves both strict inequalities, yielding
$1/2<q_\epsilon<c$. The reversed inequalities apply when $c<1/2$; when
$c=1/2$, the threshold is exactly $1/2$. Boundary action counts contribute zero.
Finite sums permit both limits in $\epsilon$ to be taken termwise.

For general shared $Y$, ties at $Y=c$ contribute zero. The exact second moment
is $\E[U_0^2]=(1-1/G)p(1-p)\Pr(Y\ne c)$. Thus the variance extension of
Theorem~\ref{thm:variance} subtracts
$S_G(p)^2[\Pr(Y>c)-\Pr(Y<c)]^2$ from this expression. Finite first moment is
needed to compare with expected return; the bounded normalized update itself
is well-defined for any almost surely finite scalar $Y$.

\section{Runtime audit boundaries}
The runtime audit tests a narrow execution path: sequential one-call rollouts,
no stored sandbox forks, no eviction, no expiry, and no production network.
The backend's cleanup threads are stopped after each case. Distinct identities
are supplied by the experiment, not discovered by an agent. The varying-output
fixture is not the probabilistic environment used in the theorem, and its
returned means must not be used as evidence for marginal agreement. That
agreement is proved and numerically verified only in the analytical study.

\end{document}